\newtheorem{lemma}{Lemma}
\newtheorem{proposition}[lemma]{Proposition}
\newcommand{\RR}{{\mathbb{R}}}
\newcommand{\KL}{ \mathrm{KL}}
\newcommand{\BEAS}{\begin{eqnarray*}}
\newcommand{\EEAS}{\end{eqnarray*}}
\newcommand{\BEA}{\begin{eqnarray}}
\newcommand{\EEA}{\end{eqnarray}}
\newcommand{\BEQ}{\begin{equation}}
\newcommand{\EEQ}{\end{equation}}
\newcommand{\BIT}{\begin{itemize}}
\newcommand{\EIT}{\end{itemize}}
\newcommand{\BNUM}{\begin{enumerate}}
\newcommand{\ENUM}{\end{enumerate}}
\setlist[description]{leftmargin=\parindent,labelindent=\parindent}
\title{InfoCatVAE: Representation Learning\\ with Categorical Variational Autoencoders}
\author{
  Marc Lelarge \\
  \texttt{INRIA - ENS}\\
  \texttt{marc.lelarge@ens.fr}
  \And
  Edouard Pineau \\
  \texttt{Telecom ParisTech - Safran}\\
  \texttt{edouard.pineau@safrangroup.com}
}
\begin{document}
\maketitle

\begin{abstract}

This paper describes InfoCatVAE, an extension of the variational autoencoder that enables unsupervised disentangled representation learning. InfoCatVAE uses multimodal distributions for the prior and the inference network and then maximizes the evidence lower bound objective (ELBO). We connect the new ELBO derived for our model with a natural soft clustering objective which explains the robustness of our approach. We then adapt the InfoGANs method to our setting in order to maximize the mutual information between the categorical code and the generated inputs and obtain an improved model
\end{abstract}

\section{Introduction}

Neural networks are today a state of the art solution for many machine learning tasks. In particular, they show impressive results in specific tasks that require capturing complex features in data. Nevertheless, most of the neutral network success is associated with globally unavailable large labelled data-sets. To be more generally applicable, machine learning needs new unsupervised methods to leverage the largely available unlabelled training data. One role of unsupervised learning is capturing rich distributions of complex data. More specifically among unsupervised tasks, representation learning seek to expose semantically meaningful factors hidden in data. Recently generative neural network models have become a highly successful framework for this problem. In particular, variational autoencoders (VAEs) \cite{kingma2013auto} and generative adversarial networks (GANs) \cite{goodfellow2014generative} are major representation learning frameworks.

To be fully relevant for representation learning, unsupervised generative models have to encode the observable data in an informative space \cite{bengio2013representation}. For example, VAEs are traditionally built with isotropic Gaussian latent distribution that theoretically should learn one specific and exclusive latent semantically meaningful factor of variation per dimension. This idea is part of the generic definition of the $disentanglement$, assuming that any structured visible object has a factorial representation \cite{ridgeway2016survey}. Recently, several extensions of VAEs with richer prior distributions have been proposed to obtain more powerful generative representation models \cite{tomczak2017vae,dilokthanakul2016deep,nalisnick2016approximate,van2014factoring,goyal2017nonparametric}. As shown in \cite{hoffman2016elbo} one main difficulty associated to these models is the choice of the prior. 

We propose an extension of the standard VAE framework with multimodal distributions for the prior as well as for the inference network. We derive the  evidence lower bound objective (ELBO) for this new model with categorical prior to define Categorical VAE (CatVAE). We show that thanks to these modifications, even with a simple fixed prior, CatVAE is able to find salient attributes within the data leading to readable representations. Moreover, CatVAE can be used as a conditional generative model. 
Finally, we present an improved extension of the model to enhance disentanglement and the quality of the generated samples. More generally, this work aims at showing that simple and principled modifications can have interesting disentangling power without the need of specific heuristics or highly fine-tuned neural networks architectures.

\section{Related work}
\label{related}

Variational autoencoding is today the core framework in which disentanglement is investigated \cite{zhao2017infovae,higgins2016beta,chen2016variational,makhzani2015adversarial,esmaeili2018hierarchical,chen2018isolating}. A major drawback known in VAE framework comes from the VAE objective function, the evidence lower bound (ELBO), that has a form inducing uninformative and unstable representation power. 

Among the numerous studies that have been proposed about disentangled representation learning, a majority is based on a regularization along already existing objective functions for different generative models \cite{makhzani2015adversarial,zhao2017infovae,higgins2016beta,chen2016infogan,gao2018auto,kim2018disentangling,burgess2018understanding,esmaeili2018hierarchical,chen2018isolating}. Other studies focus on the clustering aspect of disentangling latent representation by exploring the effect of mixture distribution for latent representation in variational autoencoders \cite{tomczak2017vae,dilokthanakul2016deep,nalisnick2016approximate,van2014factoring}. The mixture of distributions has the advantage of empowering the latent distribution to represent more complex variations in the data. Moreover, in term of disentangling, a mixture implicitly assumes that there is a hierarchical importance among meaningful factors of variations: each mode represents a major latent factor and local isotropic distributions hold minor disentangling power. Nevertheless, the mixture models hold several limits that depends on the chosen form of the prior and the form of the objective function. 

Our work follows the footsteps of the Mixture of Gaussian based models presented in \cite{nalisnick2016approximate,dilokthanakul2016deep}. We propose not to use free prior parameterization via neural network transformation but to fix the prior parameters like in standard VAE framework (section \ref{sec:priorchoice}) but we also modify the inference network and rederive the ELBO of our model. Finally we improve our model with explicit information maximization.

\section{Background: Variational Autoencoders}

Consider a latent variable model with a data variable $x\in \mathcal{X}$ and a latent variable $z\in \mathcal{Z}$, $p(z,x) = p(z)p_\theta(x|z)$. Given the data $x_1,\dots, x_n$, we want to train the model by maximizing the marginal log-likelihood:
\BEA
\label{eq:L}
\mathcal{L} = \mathbb{E}_{p_d(x)}\left[\log p_\theta(x)\right]=\mathbb{E}_{p_d(x)}\left[\log \int_{\mathcal{Z}}p_{\theta}(x|z)p(z)dz\right],
  \EEA
  where $p_d$ denotes the empirical distribution of $X$: $p_d(x) =\frac{1}{n}\sum_{i=1}^n \delta_{x_i}(x)$.

  To avoid the (often) difficult computation of the integral in (\ref{eq:L}), the idea behind variational methods is to instead maximize a lower bound to the log-likelihood (called ELBO):
  \BEA\label{eq:var}
\mathcal{L} \geq L(p_\theta(x|z),q(z|x)) =\mathbb{E}_{p_d(x)}\left[\mathbb{E}_{q(z|x)}\left[\log p_\theta(x|z)\right]-\KL\left( q(z|x)||p(z)\right)\right].
  \EEA
  Any choice of $q(z|x)$ gives a valid lower bound. Variational autoencoders replace the variational posterior $q(z|x)$ by an inference network $q_{\phi}(z|x)$ that is trained together with $p_{\theta}(x|z)$ to jointly maximize $L(p_\theta,q_\phi)$. The variational posterior $q_{\phi}(z|x)$ is also called the encoder and the generative model $p_{\theta}(x|z)$, the decoder or generator.

  It will be convenient to decompose the optimization done in the VAEs in two steps:
  \BEA
  \label{eq:gen} \max_{\theta, \phi}\mathbb{E}_{p_d(x)}\left[\mathbb{E}_{q_\phi(z|x)}\left[\log p_\theta(x|z)\right]\right]\\
  \label{eq:inf} \max_{\phi} \mathbb{E}_{p_d(x)}\left[-\KL\left( q_\phi(z|x)||p(z)\right)\right]
  \EEA

  The term in (\ref{eq:gen}) is the negative reconstruction error. Indeed under a gaussian assumption i.e. $p_{\theta}(x|z) = \mathcal{N}(\mu_{\theta}(z), 1)$ the term $\log p_\theta(x|z)$ reduced to $\propto \|x-\mu_\theta(z)\|^2$, which is often used in practice. The maximization (\ref{eq:inf}) can be seen as a regularization term, where the variational posterior $q_\phi(z|x)$ should be matched to the prior $p(z)$.

  \section{Modifying VAEs for representation learning}
  \label{sec:catvae}

 As already noted in the literature, the VAE objective is insufficient for representation learning. This is particularly true in the so-called high capacity regime, where $p_{\theta}(x|z) = p_d(x)$ is close to achievable. In this case, a strategy for the VAE is for the encoder to match the prior $q_\phi(z|x) = p(z)$, while the decoder outputs the sample distribution without using the latent code $z$. In such a case, the latent code $z$ is independent from the data, hence useless for representation theory. 

  This is clearly due to the fact that in (\ref{eq:L}), only the marginal $p_\theta(x)$ appears and the only way VAEs enforce information in the latent code representation is by limiting the optimization in (\ref{eq:gen}) to a constrained class of decoders $p_\theta(x|z)$. In \cite{higgins2016beta}, to solve this problem, the authors modify the ELBO by putting more weight to the term (\ref{eq:inf}) through the introduction of a new parameter $\beta$. This approach is extended in \cite{dupont2018jointvae} where this parameter is modified during the training (like an annealing procedure).

We propose to reuse the base block of semi-supervised method found in \cite{kingma2014semi} to enforce information in the latent code representation. We call it CatVAE (for Categorical VAE). This architecture will be improved in the next section to obtain InfoCatVAE. CatVAE consists in the three following modifications to the VAE:

\BIT
\item We modify the latent variable model as follows:
  $p(x,c,z) = p(c)p(z|c)p_\theta(x|z)$ where $c\in\{1,\dots K\}$ is a discrete latent variable. In other words, the prior $p(z)$ of the VAEs is replaced by a prior $p(c)p(z|c)$. For instance, if the data are images from the MNIST dataset, then $c$ would encode the numerical identity of the digit (0-9) and then similarly as in standard VAEs, we would take a prior $p(z|c) =\mathcal{N}(z;\mu_c,Id)$ for a well-chosen $\mu_c$.
\item We define the inference network by $q_\phi(c|x)q_{\phi}(z|x,c)$.
\item We modify the objective (\ref{eq:inf}) as follows:
  \BEA
  \label{eq:regucatvae}
\max_\phi \mathbb{E}_{p_d(x)}\left[-\KL\left( q_\phi(c|x)||p(c)\right)\right] - \mathbb{E}_{p_d(x)}\left[ \mathbb{E}_{q_\phi(c|x)} \KL\left(q_\phi(z|c,x)||p(z|c) \right)\right].
  \EEA
\EIT

\begin{figure}[ht]
  \begin{center}
  \includegraphics[width=1\textwidth]{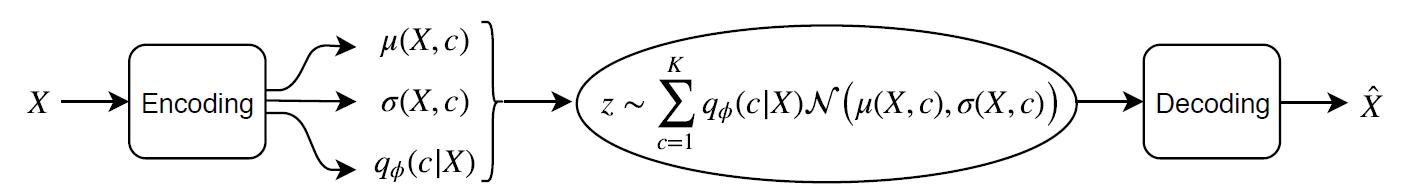}
  \end{center}
\caption{\label{fig:cat_vae} CatVAE: square blocks represent neural networks, oval-shaped blocks represent sampling.}
\end{figure}  

A representation of CatVAE architecture is illustrated in figure \ref{fig:cat_vae}.
Note that our proposition is not a VAE with a prior given by a mixture of Gaussians but still CatVAE maximizes a variational lower bound of the marginal log-likelihood as shown by the following proposition:

  \begin{proposition}
We have $\mathcal{L} \geq (3) + (5)$.
  \end{proposition}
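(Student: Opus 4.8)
The plan is to run the standard variational (Jensen) argument, but applied to the augmented latent variable model $p(x,c,z)=p(c)p(z|c)p_\theta(x|z)$ and with the \emph{structured} variational posterior $q_\phi(c,z|x)=q_\phi(c|x)q_\phi(z|x,c)$, and then to identify the three resulting terms as the negative reconstruction error of (\ref{eq:gen}) together with the two Kullback--Leibler penalties of (\ref{eq:regucatvae}).

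Concretely, I would proceed as follows. First, fix $x$ and write the marginal as $p_\theta(x)=\sum_{c=1}^{K}\int_{\mathcal Z} p_\theta(x|z)\,p(z|c)\,p(c)\,dz$, using that $c$ is discrete. Next, introduce $q_\phi(c,z|x)$ by multiplying and dividing inside the sum/integral, so that $p_\theta(x)=\mathbb{E}_{q_\phi(c,z|x)}\!\big[p_\theta(x|z)p(z|c)p(c)/q_\phi(c,z|x)\big]$, and apply Jensen's inequality to the concave map $\log$ to get
\[
\log p_\theta(x)\ \geq\ \mathbb{E}_{q_\phi(c,z|x)}\!\left[\log\frac{p_\theta(x|z)\,p(z|c)\,p(c)}{q_\phi(c|x)\,q_\phi(z|x,c)}\right].
\]
Then split the logarithm into $\log p_\theta(x|z)$, $\log\frac{p(z|c)}{q_\phi(z|x,c)}$ and $\log\frac{p(c)}{q_\phi(c|x)}$. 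Integrating $z$ out in the second piece (with $c$ held fixed) turns it into $-\KL\!\left(q_\phi(z|x,c)\|p(z|c)\right)$, so that the outer expectation over $c\sim q_\phi(c|x)$ yields $-\mathbb{E}_{q_\phi(c|x)}\KL\!\left(q_\phi(z|x,c)\|p(z|c)\right)$; the third piece integrates to $-\KL\!\left(q_\phi(c|x)\|p(c)\right)$. Finally, take $\mathbb{E}_{p_d(x)}$ on both sides: the reconstruction piece becomes $\mathbb{E}_{p_d(x)}\mathbb{E}_{q_\phi(c|x)}\mathbb{E}_{q_\phi(z|x,c)}[\log p_\theta(x|z)]$, which is exactly the quantity maximized in (\ref{eq:gen}) once one uses $q_\phi(z|x)=\sum_{c}q_\phi(c|x)q_\phi(z|x,c)$, while the two KL pieces are exactly the terms appearing in (\ref{eq:regucatvae}). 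This gives $\mathcal{L}\geq(3)+(5)$ for every fixed $(\theta,\phi)$; since $\mathcal{L}$ does not depend on the variational parameters, one may take the supremum over $(\theta,\phi)$ on the right without breaking the inequality.

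This argument is essentially routine, so I do not anticipate a genuine obstacle; the delicate points are bookkeeping rather than mathematics. The first is to make the marginalization identity $q_\phi(z|x)=\sum_{c}q_\phi(c|x)q_\phi(z|x,c)$ explicit, so that the reconstruction term produced by the derivation really coincides with the term written in (\ref{eq:gen}) in the earlier, unstructured notation. The second is the handling of the $\max$ operators in the statement: Jensen only delivers the bound pointwise in $(\theta,\phi)$, and one must note that $\mathcal{L}$ is $\phi$-independent in order to pass to the maximized form. A minor technical caveat is that all $\KL$ divergences should be read as $+\infty$ when the supports fail to match, in which case the bound is vacuous; under the Gaussian choices $p(z|c)=\mathcal{N}(z;\mu_c,\mathrm{Id})$ together with a Gaussian encoder this degeneracy does not occur and every term is finite.
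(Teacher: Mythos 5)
Your proposal is correct and follows essentially the same route as the paper's own proof in Appendix~\ref{app2}: marginalize over $(c,z)$, insert the structured posterior $q_\phi(c|x)q_\phi(z|x,c)$, apply Jensen, and split the logarithm into the reconstruction term and the two KL penalties. Your version is in fact slightly more careful on the bookkeeping (the signs of the KL terms, the marginalization identity for $q_\phi(z|x)$, and the handling of the $\max$ operators), whereas the paper's displayed final inequality contains a sign typo on the KL terms.
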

  \begin{proof}
    See appendix \ref{app2}
    \end{proof}

Also CatVAE still optimizes the (log-)marginal $p_\theta(x)$, it enforces information in the latent code thanks to the fixed prior $p(c)p(z|c)$ which should be chosen appropriately, see Section \ref{sec:priorchoice}. The modifications proposed for CatVAE raise some difficulties due to the categorical variable $c$ (it is well known that the reparametrization trick which is used for VAEs cannot be directly applied to discrete variables). We will explain how we overcome these difficulties in Section \ref{sec:cat_opt}.
  
  In order to have more intuition about the term (\ref{eq:regucatvae}), we consider the case where $p(c)$ is the uniform distribution over $\{1,\dots, K\}$, $q_\phi(z|c,x)= \mathcal{N}(z;\mu_\phi(c,x),\sigma_{\phi}^2(c,x))$ and $p(z \vert c) = \mathcal{N}(z;\mu_c,1)$.
  In this case, the first term is 
  \BEAS
-\KL\left( q_\phi(c|x)||p(c)\right) = \log K + H\left(q_\phi(c|x)\right),
\EEAS
hence maximizing it is the same as maximizing the entropy of the distributin $q_\phi(c|x)$, i.e. each category should be evenly represented. Then the second term becomes

\BEAS
-\mathbb{E}_{q_\phi(c|x)} \KL\left(q_\phi(z|c,x)||p(z|c) \right) =\frac{1}{2} \sum_{c=1}^K q_\phi(c|x)\left(1-\sigma_{\phi}^2(c,x)+\log \sigma_{\phi}^2(c,x)-(\mu_\phi(c,x) - \mu_c)^2 \right)
\EEAS

For a given $c$, if $q_\phi(c|x)$ is close to one, i.e. if the category associated to $x$ is with high probability the $c$'s category, then the variance $\sigma_{\phi}^2(c,x)$ should be close to one and the mean $\mu_\phi(c,x)$ close to $\mu_\ell$, in order to maximize this term.
In summary, the optimization step (\ref{eq:regucatvae}) can be interpreted as a soft clustering step in $\RR^d$ where $d$ is the dimension of the code $z$ and with $K$ clusters. Each data point $x$ is mapped to $\RR^d$ thanks to the function $\mu_\phi(c,x)$. The centers of the clusters are given by the means $\mu_c$ for $c\in \{1,\dots, K\}$, the classifier $q(c|x)$ is then a soft allocation of the data $x$ to the cluster $c$ and the functions $\mu_\phi$ are updated to get closer to the center of the cluster. The entropy term prevents the trivial solution where all points are mapped to one cluster and ensures that data points are evenly distributed among the $K$ different clusters. Note that step (\ref{eq:regucatvae}) tends to concentrate each cluster and this step will be mitigated by the reconstruction step (\ref{eq:gen}). It is also important to note a main difference comparing to the other approaches in the literature described below. To the best of our knowledge, our CatVAE is the only architecture where all the distances between the code and all clusters are explicitly computed and used in the backpropagation algorithm. This is in contrast with standard approaches where typically first a classifier determines the cluster and then only the distance between the code and the selected cluster is computed. We believe that this specificity in our approach explains the robustness of CatVAE. Of course, our approach will be problematic if the number of clusters is very large.

Note that once the CatVAE has been trained, it can be used as a generative model: a data point of the category $c$ can be generated by first sampling a code with the distribution $p(z|c)$ and then passing it through the encoder $p_\theta(x|z)$. The center of the cluster $c$ is mapped to $p_\theta(x|\mu_c)$.

\section{InfoCatVAE: CatVAE with information maximization}\label{sec:acat}

We noted that in InfoGANs, the classifier is trained on generated inputs only and this already provide impressive results. In our CatVAE model, the classifier is only trained on real data. But since our CatVAE can be turned into a generative model, we can also use the generated inputs to improve our classifier. By analogy with InfoGAN, this can be easily done by modifying the step (\ref{eq:regucatvae}) as follows:


 \BEA
\max_\phi \mathbb{E}_{p_d(x)}\left[-\KL\left( q_\phi(c|x)||p(c)\right)\right] - \mathbb{E}_{p_d(x)}\left[ \mathbb{E}_{q_\phi(c|x)} \KL\left(q_\phi(z|c,x)||p(z|c) \right)\right] \\ \label{eq:reguinfocatvae} + {\mathbb{E}_{p(c)p(z \vert c)}\Big[ \mathbb{E}_{p_{\theta}(x \vert z)} \big[ \log q_{\phi}(c \vert x) \big] \Big]}
  \EEA
  
We call this model the InfoCatVAE. Indeed the same computation as in \cite{chen2016infogan} can be carried out here and the additional term (\ref{eq:reguinfocatvae}) is a variational lower bound of mutual information between the category $c$ and the generated inupt $p_\theta(x|z)$ (where $z$ has been sampled with distribution given by the prior $p(z|c)$)

We find illustrative experiments of the advantage of using information maximization in Section \ref{sec:exp}. The additional part of InfoCatVAE over CatVAE is illustrated in figure \ref{fig:cat_vae_arch}.

\begin{figure}[ht]
  \begin{center}
  \includegraphics[width=1\textwidth]{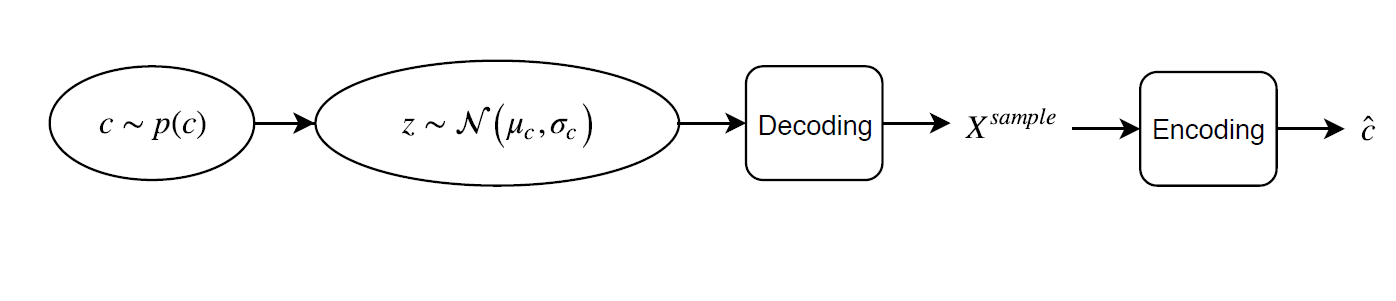}
  \end{center}
\caption{\label{fig:cat_vae_arch} InfoCatVAE: square blocks represent neural networks, oval-shaped blocks represent sampling. This brick is added to CatVAE architecture and trained at the same time. Encoding and decoding blocks are shared with CatVAE presented in figure \ref{fig:cat_vae}.}
\end{figure}

In practice, the three terms in (\ref{eq:reguinfocatvae}) are respectively multiplied by scalar factor $\beta_{cont}$, $\beta_{cat}$ and $\beta_{info}$ \cite{higgins2016beta}. These new parameters act as trade-off between reconstruction, latent distribution matching and preservation of salient information within latent space.

\section{Optimization with categorical sampling layer}
\label{sec:cat_opt}

The reparametrization trick developed in \cite{kingma2013auto} has no natural equivalence for multinomial sampling. A method called Gumbel max trick \cite{gumbel2012statistics} is widely used in machine learning to overcome this problem \cite{jang2016categorical,dupont2018jointvae}. In InfoCatVAE learning, we propose to overpass this problem not by using Gumbel max trick but with an alternative two-step method that is naturally induced by our model.

First, as seen above, in the inference network, $q_\phi (c \vert x)$ for all $c$'s and all data points $x$. Each $x$ is then represented $K$ times in the latent space, and each representation is weighted by the probability $q_\phi (c \vert x)$. This first step enables back-propagation by still keeping the spirit of a uniform discrete sampling conditioned on the data.

Second, the information maximization step presented in section \ref{sec:acat} optimizes the same bricks than inference learning, but with a free generative approach. This second step then propose a framework to optimize the InfoCatVAE network with a real categorical sampling, that does not block the back-propagation since the sampling is the initial layer. 

Therefore, by construction, infoCatVAE naturally enables categorical sampling optimization.

\section{Choice of the prior}
\label{sec:priorchoice}

As presented in Section \ref{sec:catvae}, CatVAE requires the choice of the parameters $\mu _c$ of the prior distribution $p(z \vert c)$. Our intuition is that the prior should be fixed such that it fills the same objective than the isotropic structure of standard VAE. We remind that the isotropic Gaussian latent distribution should theoretically learn one specific and exclusive latent semantically meaningful factor of variation per dimension. 

We choose the dimensions of the latent space $d$ such that $\exists \ \delta \in \mathbb{N} \ s.t. \ d=K.\delta$. We consider that the data with $K$ categories should be encoded with a $K$-modal distribution approximated with $K$ Gaussian whose mean parameters $\{\mu_c\}_{c=1}^K$ lives in $\mathbb{R}^{d}$ and are all respectively orthogonal. De facto $\forall c \in \llbracket {1:K} \rrbracket$ we propose:

\begin{equation}
\mu_c=\{\lambda.\mathds{1}_{j \in \llbracket {c\times \delta:(c+1) \times \delta} \llbracket} \}_{j=1}^{d}
\end{equation}

The main idea behind this choice of prior is that each major categories within the data should be mainly represented within a $\delta-$dimensional subspace of the latent space. This framework forces the model to learn quasi-independently each fundamental class structure. This way, the interpretability of the latent representation should be optimized.

  \section{Relation with InfoGANs}

Generative adversarial networks (GANs) train generative models through an objective function that implements a two-player zero sum game between a discriminator $D$ and a generator $G$. That is $G$ maps random vectors $z$ to generated inputs $\tilde{x} =G(z)$ and we assume $D$ to predict the probability of example $x$ being present in the dataset: $p(y=1|x,D)= (1+e^{-D(x)})^{-1}$. But GANs share with standard VAEs the absence of restrictions on the manner generator should use the noise. This way, there are no insurance that latent representation would be disentangled.

InfoGANs propose to decompose the input noise vector into two parts: $z$ which is treated as source of incompressible noise and $c$ which will represent salient semantic features of the data distribution. InfoGANs also introduce a variational posterior $q(c|x)$. To highlight the similarities between our framework and InfoGANs, we use parameter $\theta$ for the generator $G_\theta(c,z)$ and for the variational posterior $q_\theta(c|x)$ and parameter $\phi$ for the discriminator $D_\phi(x)$. With these notations, the minimax game with a variational regularization of mutual information and hyperparameter $\lambda$ solved by InfoGANS can be written as:
  \BEA
\label{eq:gan}  \max_{\theta} \mathbb{E}_{p(z)}\log p\left( y=1|G_{\theta}(c,z), D_\phi\right)+\lambda \mathbb{E}_{p(z)p(c)}\log q_\theta(c|G_\theta(c,z))\\
\label{eq:disc}  \max_{\phi} \mathbb{E}_{p_d(x)}\log p(y=1|x,D_\phi) - \mathbb{E}_{p(z)} \log p(y=1|G_\theta(c,z),D_\phi).
  \EEA
  Step (\ref{eq:gan}) updates the generator $G_\theta$ as well as the classifier $q_\theta$ while step (\ref{eq:disc}) updates the discriminator $D_{\phi}$.

 We refer to \cite{hu2017unifying} for an in-depth comparison of VAEs and GANs. As explained above, VAEs can be seen as generative models and similarly GANs can produce embeddings from the data. Indeed, the discriminator is a neural network with one final fully connected layer to output the boolean parameter. A natural encoder is then given by the discriminator with the last fully connected layer removed. Note however that the discriminator is trained to detect generated sample from real data so that the features that will be kept by the discriminator are those helping into the discimination task. It is not clear a priori that those features will be the most readable one as expected in representation learning. In practice, for InfoGANs, $D$ and $q$ share all layers except the last one, so that the encoder described above for GANs will still work for InfoGANs and keep informations about the categories.

  Note also that the classifier of the InfoGAN $q_\theta$ is trained only on generated inputs, whereas the classifier of the CatVAE is trained on real data but with 'noisy' labels. We will present in Section \ref{sec:acat} an extension of CatVAE building on this remark in order to improve the performance of the classifier.

  \section{Relation with adversarial autoencoders}
  
  Adversarial autoencoders (AAEs) build on standard (i.e. non variational) autoencoders with a deterministic encoding function denoted here $E_\eta(x)$ mapping each input $x$ to a code $z$ and a generative process $p_\theta(x|z)$. The regularization in AAEs is done thanks to an adversarial network matching the prior $p(z)$ we want to impose on the code with the aggregated posterior distribution $ p_d(E_\eta(x)) $. We denote by $D_\phi$ the discriminator of the adversarial network so that updates can now be written as:
  \BEA
\label{eq:aae1}  \max_{\theta, \eta}\mathbb{E}_{p_d(x)} \log p_{\theta}\left( x|E_\eta(x)\right) + \mathbb{E}_{p_d(x)} \log p\left(y=1|E_\eta(x), D_\phi \right).\\
\label{eq:aae2}  \max_\phi \mathbb{E}_{p(z)}\log p\left( y=1|z, D_\phi\right) - \mathbb{E}_{p_d(x)} \log p\left(y=1|E_\eta(x), D_\phi \right).
\EEA

Step (\ref{eq:aae1}) updates the encoder and the decoder in order to minimize the reconstruction loss (first term in (\ref{eq:aae1})), as well as the encoder in order to confuse the discriminator (second term in (\ref{eq:aae1})).
Step (\ref{eq:aae2}) is the classical update of the discriminator.

As in the modification from GAN to InfoGAN, structure can be imposed onto the prior by using a code $(c,z)$ with a distribution $p(c)p(z)$ and typically $p(c)$ is a categorical distribution. Also the encoder $E_\eta$ now generates both a category $c$ and a continuous code $z$. We see that the first component of $E_\eta(x)$ plays the role of a classifier which can be used for unsupervised clustering.

Also the losses are not the same as ours, AAEs is very similar to our CatVAE. However, we note that AAEs inforce only a certain type of prior on the code $(c,z)$. Indeed, the category $c$ is one-hot encoded which corresponds in our CatVAE to orthogonals means $(\mu_c, c\in \{1,\dots,K\})$ for the distributions $p(z|c)$ of our CatVAE. Our CatVAE allows us to be more flexible for the choice of the priors.


\section{Experiment setup and illustrative results}\label{sec:exp}

In this section, we aim at illustrating that InfoCatVAE enables readable representation and controlled generation. Therefore, we first illustrate our work with MNIST and FashionMNIST data with trivial multilayer perceptron architecture (see table \ref{tab:table1} in appendix \ref{app1})

Figure \ref{fig:interpolation} and figure \ref{fig:dim_extension} illustrate the accomplishment of the InfoCatVAE on readable representation task.

\begin{figure}[ht]
  \begin{center}
  \includegraphics[width=0.3\textwidth]{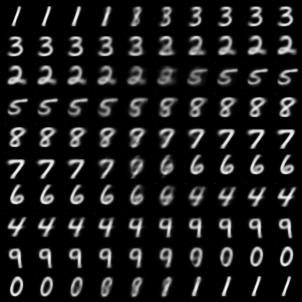}
  \includegraphics[width=0.3\textwidth]{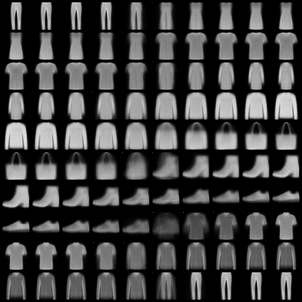}
  \end{center}
\caption{\label{fig:interpolation} Discrete interpolation between the prior centroïds $\mu_c$ when InfoCatVAE is trained with $K=10$ and $\lambda=2$ respectively on MNIST (left) and FashionMNIST (right). The left columns show the ten centroids represented in the observable space. Each line represent the reconstruction of nine latent values that pad the path between the ten centroids.}
\end{figure}

\begin{figure}[ht]
  \begin{center}
  \includegraphics[width=0.3\textwidth]{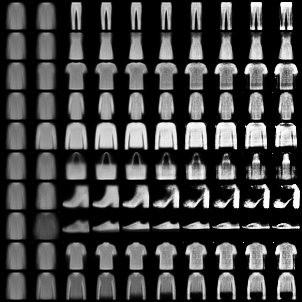}
  \end{center}
\caption{\label{fig:dim_extension} Impact of a variation of the $\lambda$ term of the prior mean parameter from $0$ to $9$ when InfoCatVAE is trained with $K=10$ and $\lambda=2$ on FashionMNIST. We see that this variation has an emptying impact on every generated samples. Nevertheless, not all have human readable explanation. For example, it makes sense for the transformation of shoes into sandal or shoulder sleeves to shoulder straps dress, but not for trousers. }
\end{figure}

We reimplemented the Adversarial Autoencoder \cite{makhzani2015adversarial} with the same multilayer perceptron encoder and decoder as presented in appendix \ref{app1} (table \ref{tab:table1}) to compare our sampling capacity with comparable state-of-the-art variational autoencoder sampling.

\begin{table}[h!]
  \begin{center}
    \begin{tabular}{l|l}
      \textbf{Model} & \textbf{MNIST LL score}\\
      \hline
      Adversarial autoencoder $(K=1)$ & 95.5 \\
      CatVAE $(K=10)$ & 111.2 \\
      InfoCatVAE $(K=10)$ & 113.8 \\
    \end{tabular}
  \end{center}
  \caption{Log-likelihood of the 10K generated samples from different generative models trained on MNIST with 600 epochs. Higher values are better. The density function is estimated by the KernelDensity function of scikit-learn \cite{pedregosa2011scikit}, whose bandwidth parameter has been estimated via grid search with 5 folds cross-validation over the 60K training examples. Each model has the same encoder and decoder architecture than presented in appendix \ref{app1}.}
   \label{tab:table2}
\end{table}

The table \ref{tab:table2} illustrates the fact that CatVAE and InfoCatVAE are more adapted for sampling task than AAE when architecture parameters are reduced to their simplest form, despite the necessity for CatVAE and InfoCatVAE to learn disentangled representation.

Finally, like in InfoGAN, (\ref{eq:reguinfocatvae}) can be approximated via Monte Carlo simulation. We generate 10K samples from 10K discrete labels sampled from multinomial distribution and we compute the cross-entropy labels and inferred classes of the samples. This framework enables easy optimization of the mutual information by error gradient back-propagation. In term of result, CatVAE we find a cross-entropy of 2.03 and for InfoCatVAE a cross-entropy of 1.62. Therefore, mutual information between generated samples and categories is improved with InfoCatVAE.

\section{Robustness of the InfoCatVAE in high-capacity regime}

This section aim at showing that our choice of prior brings robustness in high capacity regime. 

Our work proposes in addition to generative information maximization the fixing of a trivial multimodal prior distribution (see section \ref{sec:catvae} and \ref{sec:priorchoice}). This framework stands between multimodal free prior learning framework \cite{nalisnick2016approximate,dilokthanakul2016deep} and unimodal fixed prior \cite{dupont2018jointvae}. In particular, we have the intuition that for high-capacity regime tasks, relaxing the fixed prior makes the learning unstable.

An illustrative task is the representation learning of multivariate time series using recurrent encoder and recurrent decoder. The particularity of sequential data representation is the necessity to encode the temporal dimension in a non-temporal space. In the particular case of unsupervised representation learning, the complexity of the objective task associated to the gaps of recurrent modeling generally demands more exotic models to achieve acceptable representative and generative power \cite{boulanger2012modeling,graves2013generating,bowman2015generating}. The rich multi-scaled structure of sequential data naturally induces the necessity to get a hierarchical representation. The readable hierarchical approach for sequential data has been treated recently \cite{chung2016hierarchical,hsu2017unsupervised,li2018deep, liang2017recurrent}. All these particularities of sequential data gives an example of complex task that can unveil the limits of models that work well simple static data representation. 

For the illustrative experiment, we continue using MNIST data but by considering each image as a 28-dimensional time series of length 28. This way the trivial structure of shared by similar digits collapses. To do so, we simply take our InfoCatVAE model with fixed prior (as described in section \ref{sec:priorchoice}) and recurrent encoder and decoder, and compare it to the same model with free priors as in \cite{dilokthanakul2016deep}. 

The figure \ref{fig:rec_mnist} illustrates the kind of result that we obtain for multivariate sequential MNIST. It shows that the association of fixed multimodal prior framework and information generation maximization bring robustness and interpretability power to our model. The details of the implementation are given in appendix \ref{app3}.

\begin{figure}[h]
  \begin{center}
  \includegraphics[width=0.3\textwidth]{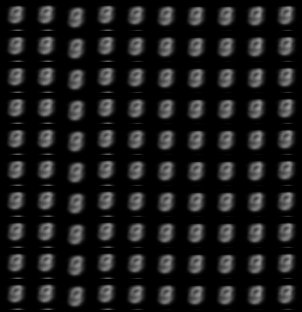}
  \includegraphics[width=0.31\textwidth]{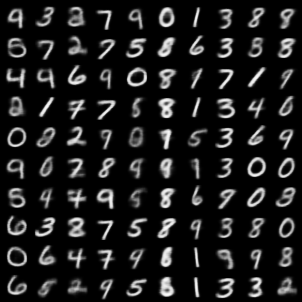}
  \includegraphics[width=0.31\textwidth]{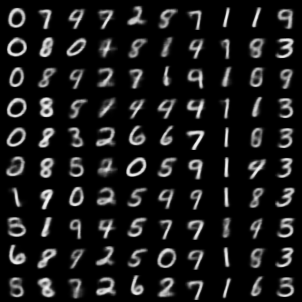}
  \end{center}
\caption{\label{fig:rec_mnist} (left) Samples generated from free prior recurrent CatVAE. (middle) Samples generated from recurrent CatVAE. (right) Samples generated from recurrent InfoCatVAE. Results obtained with CatVAE and InfoCatVAE confirm that our models learn categorical salient attributes within the data and structure the latent space such that those attributes are disentangled, even when problem is hard.}
\end{figure}

\section{Conclusion and future work}

In this paper, we have introduced the CatVAE, a multimodal variational autoencoder with fixed multimodal prior. We have shown that this model can learn disentangled representation of data and a highly credible conditional generation framework. Moreover, we show that contrary to more complex and flexible models, CatVAE overpasses complex task without the need of specifically fine-tuned architectures. Finally, we show that we can extend the CatVAE to an information optimized version, the InfoCatVAE. This framework both enhance generation and categorical information learning. 

Despite encouraging results, we largely accept that fixing the prior can pose information transfer and representation problems. The free prior framework has multiple concepts and for specific tasks, a well controlled free prior with a fine-tuned architecture might be more powerful then InfoCatVAE. But for a robust generalization, our method seems more appropriate. 

As a future work, we could prefix the prior parameters with bayesian hyper parametrization \cite{nalisnick2016approximate,nalisnick2016deep}. This way, we would not have to completely free the prior parameters through neural nets during the learning and therefore keep the stability of the InfoCatVAE, while still improving the structure of the latent space.

\bibliographystyle{plain}

\begin{thebibliography}{10}

\bibitem{bengio2013representation}
Yoshua Bengio, Aaron Courville, and Pascal Vincent.
\newblock Representation learning: A review and new perspectives.
\newblock {\em IEEE transactions on pattern analysis and machine intelligence},
  35(8):1798--1828, 2013.

\bibitem{boulanger2012modeling}
Nicolas Boulanger-Lewandowski, Yoshua Bengio, and Pascal Vincent.
\newblock Modeling temporal dependencies in high-dimensional sequences:
  Application to polyphonic music generation and transcription.
\newblock {\em arXiv preprint arXiv:1206.6392}, 2012.

\bibitem{bowman2015generating}
Samuel~R Bowman, Luke Vilnis, Oriol Vinyals, Andrew~M Dai, Rafal Jozefowicz,
  and Samy Bengio.
\newblock Generating sentences from a continuous space.
\newblock {\em arXiv preprint arXiv:1511.06349}, 2015.

\bibitem{burgess2018understanding}
Christopher~P Burgess, Irina Higgins, Arka Pal, Loic Matthey, Nick Watters,
  Guillaume Desjardins, and Alexander Lerchner.
\newblock Understanding disentangling in beta-vae.
\newblock {\em arXiv preprint arXiv:1804.03599}, 2018.

\bibitem{chen2018isolating}
Tian~Qi Chen, Xuechen Li, Roger Grosse, and David Duvenaud.
\newblock Isolating sources of disentanglement in variational autoencoders.
\newblock {\em arXiv preprint arXiv:1802.04942}, 2018.

\bibitem{chen2016infogan}
Xi~Chen, Yan Duan, Rein Houthooft, John Schulman, Ilya Sutskever, and Pieter
  Abbeel.
\newblock Infogan: Interpretable representation learning by information
  maximizing generative adversarial nets.
\newblock In {\em Advances in Neural Information Processing Systems}, pages
  2172--2180, 2016.

\bibitem{chen2016variational}
Xi~Chen, Diederik~P Kingma, Tim Salimans, Yan Duan, Prafulla Dhariwal, John
  Schulman, Ilya Sutskever, and Pieter Abbeel.
\newblock Variational lossy autoencoder.
\newblock {\em arXiv preprint arXiv:1611.02731}, 2016.

\bibitem{chung2016hierarchical}
Junyoung Chung, Sungjin Ahn, and Yoshua Bengio.
\newblock Hierarchical multiscale recurrent neural networks.
\newblock {\em arXiv preprint arXiv:1609.01704}, 2016.

\bibitem{dilokthanakul2016deep}
Nat Dilokthanakul, Pedro~AM Mediano, Marta Garnelo, Matthew~CH Lee, Hugh
  Salimbeni, Kai Arulkumaran, and Murray Shanahan.
\newblock Deep unsupervised clustering with gaussian mixture variational
  autoencoders.
\newblock {\em arXiv preprint arXiv:1611.02648}, 2016.

\bibitem{dupont2018jointvae}
Emilien Dupont.
\newblock Learning disentangled joint continuous and discrete representations.
\newblock {\em arXiv preprint arXiv:1804.00104}, 2018.

\bibitem{esmaeili2018hierarchical}
Babak Esmaeili, Hao Wu, Sarthak Jain, N.~Siddharth, Brooks Paige, and
  Jan-Willem Van~de Meent.
\newblock Hierarchical disentangled representations.
\newblock {\em arXiv preprint arXiv:1804.02086v2}, 2018.

\bibitem{gao2018auto}
Shuyang Gao, Rob Brekelmans, Greg~Ver Steeg, and Aram Galstyan.
\newblock Auto-encoding total correlation explanation.
\newblock {\em arXiv preprint arXiv:1802.05822}, 2018.

\bibitem{goodfellow2014generative}
Ian Goodfellow, Jean Pouget-Abadie, Mehdi Mirza, Bing Xu, David Warde-Farley,
  Sherjil Ozair, Aaron Courville, and Yoshua Bengio.
\newblock Generative adversarial nets.
\newblock In {\em Advances in neural information processing systems}, pages
  2672--2680, 2014.

\bibitem{goyal2017nonparametric}
Prasoon Goyal, Zhiting Hu, Xiaodan Liang, Chenyu Wang, and Eric Xing.
\newblock Nonparametric variational auto-encoders for hierarchical
  representation learning.
\newblock {\em arXiv preprint arXiv:1703.07027}, 2017.

\bibitem{graves2013generating}
Alex Graves.
\newblock Generating sequences with recurrent neural networks.
\newblock {\em arXiv preprint arXiv:1308.0850}, 2013.

\bibitem{gumbel2012statistics}
Emil~Julius Gumbel.
\newblock {\em Statistics of extremes}.
\newblock Courier Corporation, 2012.

\bibitem{higgins2016beta}
Irina Higgins, Loic Matthey, Arka Pal, Christopher Burgess, Xavier Glorot,
  Matthew Botvinick, Shakir Mohamed, and Alexander Lerchner.
\newblock beta-vae: Learning basic visual concepts with a constrained
  variational framework.
\newblock 2016.

\bibitem{hoffman2016elbo}
Matthew~D Hoffman and Matthew~J Johnson.
\newblock Elbo surgery: yet another way to carve up the variational evidence
  lower bound.
\newblock In {\em Workshop in Advances in Approximate Bayesian Inference,
  NIPS}, 2016.

\bibitem{hsu2017unsupervised}
Wei-Ning Hsu, Yu~Zhang, and James Glass.
\newblock Unsupervised learning of disentangled and interpretable
  representations from sequential data.
\newblock In {\em Advances in neural information processing systems}, pages
  1876--1887, 2017.

\bibitem{hu2017unifying}
Zhiting Hu, Zichao Yang, Ruslan Salakhutdinov, and Eric~P Xing.
\newblock On unifying deep generative models.
\newblock {\em arXiv preprint arXiv:1706.00550}, 2017.

\bibitem{jang2016categorical}
Eric Jang, Shixiang Gu, and Ben Poole.
\newblock Categorical reparameterization with gumbel-softmax.
\newblock {\em arXiv preprint arXiv:1611.01144}, 2016.

\bibitem{kim2018disentangling}
Hyunjik Kim and Andriy Mnih.
\newblock Disentangling by factorising.
\newblock {\em arXiv preprint arXiv:1802.05983}, 2018.

\bibitem{kingma2013auto}
Diederik~P Kingma and Max Welling.
\newblock Auto-encoding variational bayes.
\newblock {\em arXiv preprint arXiv:1312.6114}, 2013.

\bibitem{kingma2014semi}
Diederik~P Kingma and Max Welling.
\newblock Semi-supervised learning with deep generative models.
\newblock In {\em Advances in neural information processing systems}, pages
  3581--3589, 2014.

\bibitem{li2018deep}
Yingzhen Li and Stephan Mandt.
\newblock A deep generative model for disentangled representations of
  sequential data.
\newblock {\em arXiv preprint arXiv:1803.02991}, 2018.

\bibitem{liang2017recurrent}
Xiaodan Liang, Zhiting Hu, Hao Zhang, Chuang Gan, and Eric~P Xing.
\newblock Recurrent topic-transition gan for visual paragraph generation.
\newblock {\em CoRR, abs/1703.07022}, 2, 2017.

\bibitem{makhzani2015adversarial}
Alireza Makhzani, Jonathon Shlens, Navdeep Jaitly, Ian Goodfellow, and Brendan
  Frey.
\newblock Adversarial autoencoders.
\newblock {\em arXiv preprint arXiv:1511.05644}, 2015.

\bibitem{nalisnick2016approximate}
Eric Nalisnick, Lars Hertel, and Padhraic Smyth.
\newblock Approximate inference for deep latent gaussian mixtures.
\newblock In {\em NIPS Workshop on Bayesian Deep Learning}, volume~2, 2016.

\bibitem{nalisnick2016deep}
Eric Nalisnick and Padhraic Smyth.
\newblock Deep generative models with stick-breaking priors.
\newblock {\em arXiv preprint arXiv:1605.06197}, 2016.

\bibitem{pedregosa2011scikit}
Fabian Pedregosa, Gael Varoquaux, Alexandre Gramfort, Vincent Michel, Bertrand
  Thirion, Olivier Grisel, Mathieu Blondel, Peter Prettenhofer, Ron Weiss,
  Vincent Dubourg, et~al.
\newblock Scikit-learn machine learning in python.
\newblock {\em Journal of machine learning research}, 12(Oct):2825--2830, 2011.

\bibitem{ridgeway2016survey}
Karl Ridgeway.
\newblock A survey of inductive biases for factorial representation-learning.
\newblock {\em arXiv preprint arXiv:1612.05299}, 2016.

\bibitem{tomczak2017vae}
Jakub~M Tomczak and Max Welling.
\newblock Vae with a vampprior.
\newblock {\em arXiv preprint arXiv:1705.07120}, 2017.

\bibitem{van2014factoring}
Aaron Van~den Oord and Benjamin Schrauwen.
\newblock Factoring variations in natural images with deep gaussian mixture
  models.
\newblock In {\em Advances in Neural Information Processing Systems}, pages
  3518--3526, 2014.

\bibitem{zhao2017infovae}
Shengjia Zhao, Jiaming Song, and Stefano Ermon.
\newblock Infovae: Information maximizing variational autoencoders.
\newblock {\em arXiv preprint arXiv:1706.02262}, 2017.

\end{thebibliography}

\clearpage
\appendix

\section{Proof of proposition 1}
\label{app2}

\begin{align*}
    &\log p_{\theta} (x)  &&= \log \int _{\mathcal{Z}}{\sum_{\ell=1}^K{p_{\theta} (x,z,c=\ell)}}dz \\
    & &&= \log \int _{\mathcal{Z}}{\sum_{\ell=1}^K{p_{\theta}(x|z,c=\ell)p_{\theta}(z|c)p_{\theta}(c)} \frac{q_{\phi} (z,c|x)}{q_{\phi} (z,c|x)}}dz \\
    & &&= \log \mathbb{E}_{z,c \sim q_{\phi} (z,c|x)}{\Big[\frac{p_{\theta}(x|z,c)p_{\theta}(z|c)p_{\theta}(c)}{q_{\phi} (z,c|x)} \Big]} \\
    & && \geq \mathbb{E}_{z,c \sim q_{\phi} (z,c|x)}{\bigg[\log \Big[\frac{p_{\theta}(x|z,c)p_{\theta}(z|c)p_{\theta}(c)}{q_{\phi} (z,c|x)}\Big] \bigg]} &&&\text{from Jensen's inequality}
    \end{align*} 

    Then with $q_{\phi} (z,c|x) = q_{\phi} (c|x)q_{\phi} (z|c,x)$ we get:
    \begin{align}
    &\log p_{\theta} (x)  &&\geq \mathbb{E}_{z,c \sim q_{\phi} (z,c|x)}{\bigg[\log p_{\theta}(x|z,c) + \log \frac{p_{\theta}(z|c)}{q_{\phi}(z|c,x)} + \log \frac{p_{\theta}(c)}{q_{\phi}(c,x)} \bigg]}
    \end{align} 

    Finally:

    \begin{equation}
    \begin{multlined}
    \mathcal{L} \geq \mathbb{E}_{p_d(x)} \Big[ \mathbb{E}_{z,c \sim q_{\phi} (z,c|x)}{\big[\log p_{\theta}(x|z,c)}\big] + \mathbb{E}_{c \sim q_{\phi}{(c | x)}}{\big[KL \big(q_\phi(z|c, x) \big\| p_{\theta}(z|c) \big)\big]} \\+ KL\big(q_\phi(c | x) \big\| p_{\theta}(c) \big) \Big]
    \end{multlined}
    \end{equation}

\section{Architecture for MNIST and FashionMNIST experiments}
\label{app1}

For all experiments, the Adam optimizer is used with a learning rate of 1e-4.
We chose $\lambda=2, \beta_{cont}=10, \beta_{cat}=10$ and $\beta_{info}=100$

\begin{table}[h!]
  \begin{center}
    \begin{tabular}{l|l}
      \textbf{discriminator D / encoder Q} & \textbf{decoder G}\\
      \hline
      {Input $batch \times 784$ flattened Gray image} & Input $\in \mathbb{R}^{20}$ \\
      FC. $784 \times 400$ + Dropout(0.25) + ReLU & FC. $20 \times 400$ + Dropout(0.25) + ReLU \\
      \textbf{D}: FC. $400 \times 10$ + Softmax \\ $\textbf{Q}_{\mu}$: FC. $410 \times 20$ / $\textbf{Q}_{\sigma}$: FC. $410 \times 20$ & FC. $400 \times 784$ + Dropout(0.25) + Sigmoid \\
    \end{tabular}
  \end{center}
  \caption{Architecture used for MNIST and Fashion MNIST experiments. $\textbf{Q}_{\mu}$ and $\textbf{Q}_{\sigma}$ are the FC nets for respectively the inference of mean and log-variance of the conditional posterior distribution. The dropout is inserted only to avoid over-fitting.}
  \label{tab:table1}
\end{table}

\section{Architecture for multivariate sequential MNIST experiments}
\label{app3}

For all experiments, the Adam optimizer is used with a learning rate of 1e-4.
We chose $\lambda=2, \beta_{cont}=10, \beta_{cat}=10$ and $\beta_{info}=100$

\begin{table}[h!]
  \begin{center}
    \begin{tabular}{l|l}
      \textbf{discriminator D / encoder Q} & \textbf{decoder G}\\
      \hline
      {Input $28 \times batch \times 28$} & Input $\in \mathbb{R}^{20}$ \\
      bidirectional GRU. $28 \times 128$ & FC. $20 \times 128$ \\
      \textbf{D}: FC. $256 \times 10$ + Softmax & GRU. $20 \times 128$ + ReLU \\
      $\textbf{Q}_{\mu}$: FC. $266 \times 20$ / $\textbf{Q}_{\sigma}$: FC. $266 \times 20$ & FC. $128 \times 28$ + Sigmoid \\
    \end{tabular}
  \end{center}
  \caption{Architecture used for 28-dimensional sequential MNIST experiment. $\textbf{Q}_{\mu}$ and $\textbf{Q}_{\sigma}$ are the FC nets for respectively the inference of mean and log-variance of the conditional posterior distribution}
  \label{tab:table4}
\end{table}

The 28-dimensional MNIST sequences are considered as real-valued times series. Therefore, the loss function is the sum of the mean squared error of each time step.

\end{document}